\newcommand{\Indicator}[1]{\mathbbm{1}_{#1}}
\newcommand{\e}{\varepsilon}
\newcommand{\A}{\mathcal{A}}
\newcommand{\F}{\mathcal{F}}
\renewcommand{\H}{\mathcal{H}}
\newcommand{\M}{\mathcal{M}}
\newcommand{\Y}{\mathcal{Y}}
\newcommand{\X}{\mathcal{X}}
\newcommand{\bR}{\mathbb{R}}
\newcommand{\fR}{\mathfrak{R}}
\newcommand{\fK}{\mathfrak{K}}
\newcommand{\bP}{\mathbb{P}}
\newcommand{\bE}{\mathbb{E}}
\newcommand{\bH}{\mathbb{H}}
\newtheorem{theorem}{Theorem}
\newtheorem{lemma}{Lemma}
\title{
  Tight Risk Bounds for Multi-Class Margin Classifiers
}
\author{
Yury Maximov \\ 
Predictive Modeling and Optimization Department\\
Institute of Information Transmission Problems \\
Moscow, Bolshoy Karenty 19/1, 127051 \\
\texttt{yurymaximov@iitp.ru} \\
\and
Daria Reshetova \\
Predictive Modeling and Optimization Laboratory\\
Moscow Institute of Physics and Technology \\
Moscow, Kerchenskaya 1a/1, 117303 \\[1ex]
Predictive Modeling and Optimization Department\\
Institute of Information Transmission Problems \\
Moscow, Bolshoy Karenty 19/1, 127051 \\
\texttt{reshetova@phystech.edu} \\
}
\date{}
\begin{document}

\maketitle

\begin{abstract}
We consider a problem of risk estimation for large-margin multi-class classifiers.~We propose a novel risk bound for the multi-class classification problem. The bound involves the marginal distribution of the classifier and the Rademacher complexity of the hypothesis class. We prove that our bound is tight in the number of classes. Finally, we compare our bound with the related ones and provide a simplified version of the bound for the multi-class classification with kernel based hypotheses. 
\end{abstract}
\qquad\qquad  {\bf Keywords:} statistical learning, multi-class classification, excess risk 
bound

\section{Introduction}
The principal goal of the statistical learning theory is to provide a framework for studying the problems of a statistical nature and characterize the performance of learning algorithms in order to facilitate the design of better learning algorithm.

The statistical learning theory of supervised binary classification is by now pretty well developed, while its multi-class extension contains numerous statistical challenges. Multi-class classification problems widely arise in everyday practice in various domains, ranging from ranking to computer vision. 
  
For binary classification problems a quite good distribution-free characterization of risk bounds is given via VC dimension. Tighter data-dependent bounds are known in terms of Rademacher complexity or covering numbers. These bounds correctly describe a finite sample performance of learning algorithms.
  
Bounding classification risk for multi-class problems is much less straightforward. Recently, finite sample performance of multi-class learning algorithms was given by means of Natarajan dimension \cite{daniely2014optimal,danielymulticlass}.  An interesting VC-dimension based bound for the risk of large margin mutti-class classifiers is provided in \cite{guermeur2007vc}. 

These estimates give a quite tight data-independent bound on the risk of multi-class classification methods.  On the other hand data-dependent characterization of algorithm quality usually give much better estimates for practical problems.

Rademacher complexity bounds seem to be one of the tightest way to estimate data-dependent finite-sample performance of learning algorithms \cite{koltchinskii2002empirical,bartlett2003rademacher}. There is a lot of progress in risk estimation for binary classification problems~\cite{bartlett2005local,boucheron2013concentration}. 

For multi-class learning problems the situation is more delicate. A seminal paper of Koltchinskii \& Panchenko \cite{koltchinskii2002empirical} provides Rademacher complexity based margin risk bound. The main drawback of this bound is a quadratic dependence on the number of classes, which makes the bound hardly applicable to real-life huge-scale problems of computer vision or text classification. In spite of numerous research there was only a slight improvement of this bound \cite{mohri2012foundations,cortes2013multi}. 

\paragraph{Contribution.} The main contributions of this paper are 
\begin{itemize}
  \item[$a)$] a new Rademacher complexity based bound for large-margin multi-class classifiers. The bound is linear in the number of classes which improves quadratic dependence of formerly the best Rademacher complexity bounds \cite{koltchinskii2002empirical,cortes2013multi};
  \item[$b)$] a new lower bound on the Rademacher complexity of multi-class margin classification methods. This means that sub-linear  in the number of classes Rademacher complexity based bound is hardly possible for multi-class margin classifiers in a standard (unconstrained) model. But it is still possible to provide better bounds in terms of their dependence on the number of classes under other models or extra assumptions \cite{allwein2001reducing,dietterich1995solving,zhang2004statistical}. 
\end{itemize}
\paragraph{Paper structure.} The paper consists of four parts. In the second part of the paper, we present the theoretical contribution, namely new Rademacher complexity bounds. It is followed by a discussion of related works and comparison the proposed bound with other multi-class complexity bounds. 
 
 \section{Multi-class learning guarantees}

We consider a standard multi-class classification framework. Let $\X$ be a set of observations and~$\Y$, $|\Y|< \infty$ be a set of labels respectively. Let $(\X\times \Y, \A, P)$ be a probability space and let $\F$ be a class of measurable functions from $(\X,\A)$ into $\bR$.  Let $\{(x_i, y_i)\}$ be a sequence of i.i.d. random variables taking values in $(\X\times\Y, \A)$ with common distribution $P$. 
We assume that this sequence is defined on a probability space $(\Omega, \Sigma, \bP)$. Let $P_n$ be the empirical measure associated with the sample~$S = \{(x_i, y_i)\}_{i=1}^n$.


We assume that the labels take values in a finite set $\Y$ with $|\Y| = k$. Let $\tilde{\F}$ be a class of functions from $S$ into $\bR$. 
A function $f\in \tilde{\F}$ predicts a label $y \in \Y$ for an example $x\in S$ iff
\begin{gather}
f(x,y) > \max\limits_{y\neq y'} f(x,y')  
\end{gather}
The margin of a labeled example $(x,y)$ is defined as
\begin{gather}\label{eq:margin}
  m_f(x,y) := f(x,y) - \max_{y\neq y'} f(x,y'),
\end{gather}
so $f$ misclassifies the labeled example $(x, y)$ iff $m_f (x, y) \le 0$. 

Let
\begin{gather*}
  \F := \{ f ( \cdot , y ): y \in \Y , f(\cdot, y)\in \tilde{\F}_y\}.
\end{gather*}

In a more common situation all scoring function belongs to same class $\tilde\F$.

We refer to the empirical Rademacher complexity of the class $\F$ as
\[{\widehat\fR}_n(\F) = \bE_{\e}\sup\limits_{f\in \F} \frac{1}{n}\sum\limits_{i=1}^n \e_i f(x_i),\]
where $\e_1, \dots, \e_n$ is independent $\{\pm 1\}$-valued random variables. Then the Rademacher complexity of $\F$ is $\fR_n(\F) = \bE {\widehat \fR}_n(\F)$.



The following theorem states an upper bound for the classification error of $k$-class classifier. This result improves theorem 11 of \cite{koltchinskii2002empirical}, theorem 1 of \cite{cortes2013multi} and theorem 8.1 of \cite{mohri2012foundations} by a factor of $k$.

\begin{theorem}\label{thm:rad-upper}
    For all $t>0$,
    \begin{multline*}
    \bP\biggl\{\exists f\in \tilde \F: P\{m_f \le 0\} > \\
    \inf\limits_{\delta \in (0,1]} \left[P_n\{m_f \le \delta\} + \frac{4k}{\delta} \fR_n(\F)  + \left(\frac{\log\log_2(2/\delta)}{n}\right)^{1/2} + \frac{t}{\sqrt{n}}\right]\biggr\} \le 2 \exp(-2 t^2)
    \end{multline*}
\end{theorem}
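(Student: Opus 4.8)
The plan is to run the classical margin-based symmetrization argument of Koltchinskii and Panchenko, but to replace the wasteful step that expands the margin into $\binom{k}{2}$ pairwise class differences by a direct, sub-additive treatment of the $\max$ operator; this is what turns the $k^2$ factor into a linear $k$. First I would fix $\delta \in (0,1]$ and introduce the ramp (truncated margin) loss $\phi_\delta(u) = \min\{1, \max\{0, 1 - u/\delta\}\}$, which is $1/\delta$-Lipschitz and satisfies $\Indicator{u\le 0} \le \phi_\delta(u) \le \Indicator{u\le\delta}$. Applying this sandwich at $u = m_f(x,y)$ gives $P\{m_f\le 0\} \le \bE\,\phi_\delta(m_f)$ and $\bE_n\,\phi_\delta(m_f) \le P_n\{m_f\le\delta\}$, so the whole problem reduces to a uniform (over $f\in\tilde\F$) bound on the deviation $\bE\,\phi_\delta(m_f) - \bE_n\,\phi_\delta(m_f)$.

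Next I would control this deviation for a fixed $\delta$. Since $\phi_\delta\in[0,1]$, changing one sample point moves the supremum $Z_\delta := \sup_{f}(\bE\,\phi_\delta(m_f) - \bE_n\,\phi_\delta(m_f))$ by at most $1/n$, so McDiarmid's bounded-difference inequality gives $\bP\{Z_\delta - \bE Z_\delta \ge \eta\} \le \exp(-2n\eta^2)$, i.e.\ a $t/\sqrt n$ deviation with tail $\exp(-2t^2)$. To make the bound hold simultaneously for all $\delta$ (so that the final infimum over $\delta$ is legitimate), I would peel over the dyadic grid $\delta_j = 2^{-j}$: applying the fixed-$\delta$ bound at each $\delta_j$ with an inflated confidence term and summing the resulting series $\sum_j (j{+}1)^{-c}$ produces exactly the $\bigl(\log\log_2(2/\delta)/n\bigr)^{1/2}$ correction and collapses the union bound to $2\exp(-2t^2)$. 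Then I would bound $\bE Z_\delta$ by a Rademacher complexity: standard symmetrization gives $\bE Z_\delta \le 2\,\bE_\e\sup_f \frac1n\sum_i \e_i \phi_\delta(m_f(x_i,y_i))$, and Talagrand's contraction lemma peels off the $1/\delta$-Lipschitz map $\phi_\delta$, yielding $\bE Z_\delta \le \frac{2}{\delta}\,\bE_\e\sup_f \frac1n\sum_i \e_i\, m_f(x_i,y_i)$.

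The decisive step is to show $\bE_\e\sup_f \frac1n\sum_i \e_i\, m_f(x_i,y_i) \le 2k\,\fR_n(\F)$, which together with the factors above yields the claimed $4k/\delta$. Writing $m_f(x_i,y_i) = f(x_i,y_i) - \max_{y'\ne y_i} f(x_i,y')$, I would split the sum into a correct-label part and a max part. For the max part I would use the sign-flip symmetry of the Rademacher variables (so $-\e_i$ may be replaced by $\e_i$) together with the sub-additivity of empirical Rademacher complexity under $\max$ — namely that the complexity of $\{\max_{y'} f(\cdot,y')\}$ is at most $\sum_{y'}$ of the per-class complexities — which costs a single factor of (at most) $k$ rather than the quadratic blow-up incurred by expanding the margin into all pairwise differences. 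For the correct-label part $\frac1n\sum_i \e_i f(x_i,y_i)$ I would partition the indices by label, $I_y=\{i:y_i=y\}$, so that the supremum factorizes over the independently-chosen components $f(\cdot,y)\in\tilde\F_y$ and each block reduces to a restricted empirical Rademacher average controlled by $\fR_n(\F)$.

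The main obstacle I anticipate is precisely this last step: obtaining \emph{linear} dependence on $k$ requires handling $\max_{y'\ne y}$ through its contraction/sub-additivity properties directly on the score class $\F$, while simultaneously bounding the label-dependent correct-class term $\sum_i \e_i f(x_i,y_i)$ — where the active coordinate $y_i$ changes with the data — without reintroducing a factor $k$ per class. Getting the constant to be exactly $2k$ (hence $4k$ overall) will depend on a careful accounting of these two contributions and on the normalization hidden in passing between the full-sample and per-label Rademacher averages.
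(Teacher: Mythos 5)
Your outer scaffolding --- the ramp-loss sandwich, McDiarmid's inequality, symmetrization, Talagrand contraction at cost $1/\delta$, and the Koltchinskii--Panchenko dyadic peeling over $\delta_j = 2^{-j}$ producing the $\bigl(\log\log_2(2/\delta)/n\bigr)^{1/2}$ term and the $2\exp(-2t^2)$ tail --- is exactly the paper's route; the paper simply compresses the fixed-$\delta$ part into a citation of the standard Rademacher margin bound and then takes the union bound with $\e_j = t/\sqrt{n} + \sqrt{\log j / n}$. The genuine gap is in your ``decisive step.'' The sub-additivity of Rademacher complexity under pointwise maxima applies to a maximum over a \emph{fixed} family of classes, whereas your maximum runs over $y' \neq y_i$, so the excluded coordinate moves with the data. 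You correctly flag this as the main obstacle, but you do not resolve it, and the naive repairs fail: replacing $\max_{y'\neq y_i}$ by $\max_{y'}$ is not a monotone operation inside a signed Rademacher sum, and absorbing the exclusion into the function class changes the class per sample point. The paper's Lemma \ref{lem:redemacher_up} resolves precisely this difficulty by never performing your correct-label/max split at all: it inducts on subsets $\Y'\subseteq\Y$ via the partial margin $m_f^{(\Y'|\Y)}$ (which equals $f(x,y)-\max_{y'\in\Y', y'\neq y} f(x,y')$ for in-subset labels and $-\max_{y'\in\Y'} f(x,y')$ otherwise), merging one new class into the max at each step through the identity $\max\{f_1,f_2\} = \frac{f_1+f_2}{2} + \frac{|f_1-f_2|}{2}$ plus contraction for the $1$-Lipschitz absolute value. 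Because the correct-label and wrong-label contributions recombine through the sign-symmetry of the $\e_i$ (the base case $|\Y'|=1$ costs exactly one $\widehat\fR_n(\F)$, since $\sum_i \e_i(2\delta(y_i,y)-1)f(x_i)$ has the same distribution as $\sum_i \e_i f(x_i)$), each class costs exactly one $\widehat\fR_n(\F)$, yielding $\widehat\fR_n(\M_k(\F)) \le k\,\widehat\fR_n(\F)$ --- a factor $k$, not your $2k$.

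This matters for the constants, and your accounting does not close even if your $2k$ estimate were repaired. The fixed-grid bound $\frac{2}{\delta_j}\fR_n(\M_k(\F))$ is invoked at a grid point $\delta_j \in (\delta/2, \delta]$, so the dyadic step costs an additional factor of $2$, i.e.\ $2/\delta_j \le 4/\delta$. With your $\fR_n(\M_k(\F)) \le 2k\,\fR_n(\F)$ you would land at $8k/\delta$, not the theorem's $4k/\delta$; the paper's $4k/\delta$ decomposes as (contraction $2/\delta_j$) $\times$ (dyadic factor $\delta_j \ge \delta/2$) $\times$ (Lemma \ref{lem:redemacher_up}'s factor $k$). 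Your route is essentially the Kuznetsov--Mohri--Syed decomposition into a correct-label block and a max block; the per-label factorization over $I_y = \{i : y_i = y\}$ together with the Jensen argument showing that restricted empirical Rademacher averages are dominated by the full-sample one is sound, but each of the two blocks then independently costs a factor $k$, so even once the label-dependent max is handled you prove the statement only with a degraded constant. Recovering the stated $4k$ requires the paper's tighter induction, in which the two contributions are bounded jointly rather than separately.
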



Later we show that theorem \ref{thm:rad-upper} 
give a tight bound on the multi-class complexity. 


Let $\M_k(\F_1, \dots, \F_k)$ be a class of functions such that
\begin{gather}\label{eq:margins}
  \M_k(\F_1, \dots, \F_k) = \{\forall m \in \M_k: m(x,y) = f(x,y) - \max\limits_{y\neq y'} f(x, y'), f(x, y) \in \F_y\}.
\end{gather}


Prior to the proof of the theorem one needs to proof the following lemma.
\begin{lemma}\label{lem:redemacher_up}
Let $\M_k(\F_1, \dots, \F_k)$ be a class of margin functions over $\F_1, \dots, \F_k$ defined in \ref{eq:margins}. Then for any i.i.d. sample $S_n = \{(x_i, y_i)\}_{i=1}^n$ of size $n$ holds 
\[{\widehat \fR}_n(\M_k(\F_1,\dots, \F_k)) \le \sum\limits_{j=1}^k {\widehat \fR}_n(\F_j).\]
\end{lemma}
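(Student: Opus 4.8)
The plan is to rewrite the empirical Rademacher complexity of $\M_k$ directly from the definition,
\[
\widehat{\fR}_n(\M_k(\F_1,\dots,\F_k)) = \bE_\e \sup \frac1n \sum_{i=1}^n \e_i\Bigl(f(x_i,y_i) - \max_{y'\ne y_i} f(x_i,y')\Bigr),
\]
where the supremum runs over all tuples $(f(\cdot,1),\dots,f(\cdot,k))$ with $f(\cdot,j)\in\F_j$, and to reduce it to the scalar classes $\F_j$ by exploiting that the margin is assembled from the coordinate scoring functions through a pointwise maximum. The single tool I would rely on is the \emph{subadditivity of empirical Rademacher complexity under a pointwise maximum}: for any classes $\mathcal{G}_1,\dots,\mathcal{G}_m$ over the same sample, $\widehat{\fR}_n(\{x\mapsto\max_\ell g_\ell(x)\}) \le \sum_{\ell=1}^m \widehat{\fR}_n(\mathcal{G}_\ell)$. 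I would prove this auxiliary fact first, from the identity $\max(a,b)=\tfrac12(a+b)+\tfrac12|a-b|$ together with the Talagrand--Ledoux contraction inequality (since $t\mapsto|t|$ is $1$-Lipschitz and vanishes at $0$) and the symmetry $\widehat{\fR}_n(-\mathcal{G})=\widehat{\fR}_n(\mathcal{G})$, followed by induction on $m$; the case $m=2$ already yields the clean constant $\widehat{\fR}_n(\F_1)+\widehat{\fR}_n(\F_2)$.

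Next I would organise the Rademacher sum by the true label. Writing $m_f(x,y)=\min_{y'\ne y}\bigl(f(x,y)-f(x,y')\bigr)$ exposes the margin as a minimum of pairwise differences, and each difference $f(\cdot,y)-f(\cdot,y')$ lives in $\F_y-\F_{y'}$, whose empirical Rademacher complexity is at most $\widehat{\fR}_n(\F_y)+\widehat{\fR}_n(\F_{y'})$ by subadditivity over sums and the sign symmetry above. Partitioning the indices into $I_j=\{i:y_i=j\}$ and applying the maximum/minimum subadditivity on each block, I would then collect the contribution of every scalar class $\F_j$. The step that keeps the bound \emph{linear} rather than quadratic is a monotonicity remark I would establish by a short Jensen argument: restricting the Rademacher average to a subset of sample indices never increases it, i.e. $\bE_\e\sup_{f\in\F}\frac1n\sum_{i\in I}\e_i f(x_i)\le \widehat{\fR}_n(\F)$ for any $I\subseteq\{1,\dots,n\}$ (insert the missing mean-zero Rademacher terms and use convexity of the supremum). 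This lets each block charge the full class $\widehat{\fR}_n(\F_j)$ at coefficient one.

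The main obstacle is precisely bookkeeping the constant. Because the set of competing labels $\{y':y'\ne y_i\}$ inside the maximum depends on the true label $y_i$, a naive use of the triangle inequality that first splits off the true-class term $\sum_i\e_i f(x_i,y_i)$ from the competitor term $\sum_i\e_i\max_{y'\ne y_i}f(x_i,y')$ and bounds the two independently double-counts every class and loses a factor of two. The crux is therefore to apply the maximum-subadditivity lemma to the coupled expression so that each $\F_j$ enters with coefficient one; the $k=2$ computation, where $m_f(x_i,y_i)=s_i\bigl(f(x_i,1)-f(x_i,2)\bigr)$ with $s_i=\pm1$ and the sign $s_i$ is absorbed into the symmetric $\e_i$ to give exactly $\widehat{\fR}_n(\F_1)+\widehat{\fR}_n(\F_2)$, is the sanity check that the target constant is attainable and guides the general grouping.
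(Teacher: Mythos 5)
Your toolkit is exactly the paper's --- the identity $\max(a,b)=\tfrac12(a+b)+\tfrac12|a-b|$, Talagrand's contraction applied to the $1$-Lipschitz absolute value, sign-symmetry of the Rademacher variables, and the Jensen restriction-monotonicity remark are all correct and all appear in the paper's argument --- but the assembly you describe does not deliver the stated constant, and the step that would is precisely the one you leave as a slogan. Concretely: writing $m_f(x,y)=\min_{y'\ne y}\bigl(f(x,y)-f(x,y')\bigr)$, partitioning into blocks $I_j=\{i:y_i=j\}$, and applying min-subadditivity per block charges, inside block $I_j$, one restricted complexity of $\F_j$ and one of $\F_{y'}$ for \emph{each} of the $k-1$ competitors $y'$; summing over blocks and using $\widehat\fR_n^{(I_j)}(\cdot)\le\widehat\fR_n(\cdot)$ yields $2(k-1)\sum_j\widehat\fR_n(\F_j)$ --- the classical Koltchinskii--Panchenko-type quadratic estimate, not $\sum_j\widehat\fR_n(\F_j)$. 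The monotonicity remark cannot rescue this accounting: it caps each block's charge to a given class at the full $\widehat\fR_n(\F_{y'})$, but a fixed class is charged once per block, and the sum of block-restricted complexities over $k$ blocks is not bounded by one full complexity (for a class whose restricted complexity scales like $\sqrt{|I|}/n$, the sum over $k$ equal blocks is of order $\sqrt{k}\,\widehat\fR_n(\F)$, so the total is super-linear in $k$ however you count). Your $k=2$ sanity check succeeds only because there the margin is a single signed difference and the label-dependent sign is absorbed into $\e_i$; for $k\ge 3$ the max does not reduce to signed differences, and ``apply the maximum-subadditivity lemma to the coupled expression so that each $\F_j$ enters with coefficient one'' is exactly the claim requiring proof.

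The mechanism your sketch is missing is the paper's induction over subsets $\Y'\subseteq\Y$ of \emph{competing labels}, via the partial margins $m_f^{(\Y'|\Y)}$ (equal to $f(x,y)-\max_{y'\in\Y',\,y'\ne y}f(x,y')$ when $y\in\Y'$, and to $-\max_{y'\in\Y'}f(x,y')$ otherwise). Adjoining one class $\tilde y$ at a time and applying the max identity plus contraction \emph{globally across the whole sample}, not per block, the label-dependent signs $(2\delta(y_i,\tilde y)-1)$ multiply $\e_i$ and are absorbed by symmetry over all $n$ indices simultaneously, so each adjoined class costs exactly one copy of $\widehat\fR_n(\F)$ while the previous partial-margin class survives with coefficient one: $\widehat\fR_n(\M_k^{\Y'\cup\{\tilde y\}})\le\widehat\fR_n(\F)+\widehat\fR_n(\M_k^{\Y'})$. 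This global peeling --- with the bookkeeping carried by the partial margin rather than by blocks of sample indices --- is the heart of the linear bound; without it, your plan reproduces the known quadratic bound rather than improving on it.
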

\begin{proof}
We provide a proof of the lemma in the case $\F \doteq \F_1 = \dots = \F_k$. It can be easily extended into a more general case.  For a single class $\F$ the class of margin functions $\M_k(\F)$ has a form
\begin{gather*}
  \M_k(\F) \doteq \{\forall m \in \M_k: m(x,y) = f(x,y) - \max\limits_{y\neq y'} f(x, y')\}.
\end{gather*}
Let $m_f(x,y)^{(\Y'|\Y)}$ be a partial margin of the object $(x, y)$ taken with respect to the subset $\Y'$ of the set of classes, $\Y'\subseteq\Y$:
\[
m_f(x,y)^{(\Y'|\Y)} \doteq \begin{cases}
f(x, y) - \max\limits_{\substack{y'\in \Y'\\ y'\neq y}} f(x, y'), & \text{ if } y\in\Y'\\
- \max\limits_{\substack{y'\in \Y'}} f(x,y'), & \text{ if } y\not\in\Y
\end{cases}
\]

Let $\M^{\Y'}_k(\F) \doteq \{\forall m \in \M^{\Y'}_k(\F): m = m_f^{(k|\Y')}(x_i, y_i), f\in \F\}$.

The proof is by induction on the size of $\Y'$. 
%
%
%
       Note that $\M^{\Y}_k (\F) = \M_k(\F)$ and 
       \[
       \M^{\{1\}}_k(x,y) = \begin{cases}
        f(x, y), & \text{ if } y = 1\\
       -f(x, y), & \text{ if } y \neq 1
       \end{cases}.
       \]
      
      Denote by $\delta(y,y')$ the indicator of $y = y'$
      \[\delta(y, y') = \begin{cases}
      1, &\text{ if } y = y'\\
      0, &\text{ if } y \neq y'
      \end{cases}\]
      
       Then for $\Y' = \{y\}$ holds
       \[
         {\widehat \fR_n}(\M_k^{\Y'}(\F)) = \bE_{\e} \sup\limits_{f\in \F} \frac{1}{n}\sum\limits_{i=1}^n \e_i  (2\delta(y_i,y) - 1) f(x_i)  = \bE_{\e} \sup\limits_{f\in \F} \frac{1}{n}\sum\limits_{i=1}^n \e_i f(x_i)  = {\widehat\fR}_n(\F),
       \]
       because a binary sequence $\delta(y_i, y)$ is independent of the class of functions $\F$ and the Rademacher variables~$\{\e_i\}_{i=1}^n$. Therefore, the induction base is proved. 
       
       The induction hypothesis is that for any $\Y' \subset \Y$, $|\Y'| \le t$ the Rademacher complexity of $\M_k^{\Y'}$ satisfies
       \begin{gather}\label{eq:ind-hyp}
         {\widehat\fR}_n(\M_k^{\Y'}(\F)) \le |\Y'|{\widehat\fR}_n(\F).
       \end{gather}

       If $\Y' = \Y$ the statement is proved, otherwise the set $\Y\setminus \Y'$ is not empty.
       Then for any $\tilde y \in \Y\setminus \Y'$ and i.i.d. sample $S = \{(x_i, y_i)\}_{i=1}^n$ holds
       \begin{multline*}
       {\widehat\fR}_n(\M_k^{\Y' \cup \tilde y}(\F)) = 
             \bE_{\e} \sup\limits_{f\in \F}\frac{1}{n}\biggl\{\sum\limits_{\substack{(x_i, y_i)\in S\\ y_i = \tilde y}} \e_i \{f(x_i,y_i) - \max\limits_{\substack{y\in \Y'\\y_\neq y_i}}f(x_i, y)\} \\ - \sum\limits_{\substack{(x_i, y_i)\in S\\ y_i \neq \tilde y}}\e_i \max\{f(x_i,\tilde y), \max\limits_{y\in \Y'} f(x_i, y)\}\biggr\} 
       \end{multline*}
       Note that $\max\{f_1, f_2\} = \frac{f_1 + f_2}{2} + \frac{|f_1-f_2|}{2}.$
       
       Then 
       \begin{multline*}
       \!\!\!\!\!{\widehat \fR}_n(\M^{\Y' \cup \tilde y}_k(\F)) = 
             \bE_{\e} \sup\limits_{f\in \F}\frac{1}{n} \biggl\{
             \sum\limits_{\substack{(x_i, y_i)\in S\\ y_i = \tilde y}} \e_i(f(x_i,y_i) - \max\limits_{\substack{y\in \Y'\\y_\neq y_i}}f(x_i, y)) - \\ 
             \!\!\!\!\!\!\!\!\!\!\!\!\!\!\!\!\!\!\!\!\!\!\!\!\!\!\!\!\!\!\!\!\!\!\!\!\!\!\!\!\!\!\!\!\!\!\!\!\!\!\!\!\!\! \frac{1}{2}\sum\limits_{\substack{(x_i, y_i)\in S\\ y_i \neq \tilde y}} \e_i\left\{f(x_i,\tilde y) + \max\limits_{y\in \Y'} f(x_i, y)) - \left|f(x_i,\tilde y) - \max\limits_{y\in \Y'} f(x_i, y)\right|\right\} \biggr\}              \le \\
             \!\!\!\!\!\!\!\!\!\!\!\!\!\!\bE_{\e} \sup\limits_{f\in \F}\frac{1}{2n}\sum\limits_{i=1}^n \e_i (2\delta(y_i, \tilde y) - 1)f(x_i, \tilde y) +
             \bE_{\e} \sup\limits_{f\in \F}\frac{1}{2n}\sum\limits_{i=1}^n \e_i(1 - 2\delta(y_i, \tilde y))\max\limits_{y\in \Y'} f(x_i, y) +\\
             \;\bE_{\e} \sup\limits_{f\in \F}\frac{1}{2n}\sum\limits_{i=1}^n \e_i\biggl\{\delta(y_i, \tilde y) (f(x_i,\tilde y) - \max\limits_{y\in \Y'} f(x_i, y)) + (1 - \delta(y_i, \tilde y)) \left|f(x_i,\tilde y) - \max\limits_{y\in \Y'} f(x_i, y)\right|\biggr\} = \\
             \frac{{\widehat \fR}_n(\F)}{2} + \frac{{\widehat \fR}_n(\M^{\Y'}_k(\F))}{2}+ \bE_{\e} \sup\limits_{f\in \F}\frac{1}{2n}\sum\limits_{i=1}^n \e_i \biggl\{\delta(y_i, \tilde y)(f(x_i,\tilde y) - \max\limits_{y\in \Y'} f(x_i, y)) \\
             + (1 - \delta(y_i, \tilde y)) \left|f(x_i,\tilde y) - \max\limits_{y\in \Y'} f(x_i, y)\right|\biggr\}
       \end{multline*}
       
       Note, that $x + y\rightarrow x+|y|$ is a 1-Lipschitz. Thus by Talagrand's contraction inequality (see theorem~4.12, {p.~112--114} of~\cite{ledoux1991probability} and more appropriate lemma 4.2, p. 78--79 of \cite{mohri2012foundations}) holds
       \begin{multline*}
         {\widehat \fR}_n(\M^{\Y' \cup \tilde y|\Y}) \le \frac{{\widehat \fR}_n(\F)}{2} + \frac{{\widehat \fR}_n(\M^{\Y'})}{2} + \\ 
         \!\!\!\!\!\!\!\!\!\!\!\!\!\! \bE_{\e} \sup\limits_{f\in \F}\frac{1}{2n}\sum\limits_{i=1}^n  \e_i(2\delta(y_i, \tilde y) - 1)(f(x_i,m) - \max\limits_{y\in \Y'} f(x_i, y)) \le \\ 
         \frac{{\widehat \fR}_n(\F)}{2} + \frac{{\widehat \fR}_n(\M^{\Y'})}{2}  + \bE_{\e} \sup\limits_{f\in \F}\frac{1}{2n}\sum\limits_{i=1}^n  \e_i(2\delta(y_i, \tilde y) - 1)f(x_i,m) + \\
         \bE_{\e} \sup\limits_{f\in \F}\frac{1}{2n}\sum\limits_{i=1}^n  \e_i(1 - 2\delta(y_i, \tilde y))\max\limits_{y\in \Y'} f(x_i, y) = \\ 
         {\widehat \fR}_n(\F) + {\widehat \fR}_n(\M^{\Y'}_k(\F)) \le (|\Y'|+1){\widehat \fR}_n(\F),
       \end{multline*}
       where the last but one inequality holds by the inductive hypothesis, ineq. \ref{eq:ind-hyp}).
       This completes the inductive proof.
%
\end{proof}

\begin{proof}[Proof of the theorem \ref{thm:rad-upper}]
Following to \cite{koltchinskii2002empirical} consider 2 sequences $\{\delta_j\}_{j\ge 1}$ and $\{\e_j\}_{j\ge 1}$, $\e_j \in (0,1)$.

The standard Rademacher complexity margin bound (theorem 4.4, p. 81--82 of \cite{mohri2012foundations}) gives for any fixed $\delta_t$ and $\e_t$:
\[\bP\left\{P(m_f(x,y) < 0) - P_n (m_f(x,y)< \delta_t) \ge \frac{2}{\delta_j}\fR(\M_t(\F)) + \varepsilon_j \right\} \le \exp(-2n\e^2_j).\]

Then by choosing $\e_j  = \frac{t}{\sqrt{n}} + \sqrt{\frac{\log j}{n}}$ and applying the union bound 
\begin{multline*}
  \bP\left\{\exists\;j: P(m_f(x,y) < 0) - P_n (m_f(x,y)< \delta_j) \ge \frac{2}{\delta_j}\fR(\M_k(\F)) + \varepsilon_j\right\} \\ \le \sum\limits_{j\ge 1} \exp(-2n\e_j^2) \le 
  \exp(-2t^2) \sum\limits_{j\ge 1} \exp(-2\log j) = \frac{\pi^2}{6}\exp(-2n t^2) < 2 \exp(-2n t^2).
\end{multline*}
We choose $\delta_k = 1/2^k$, then ${2}/{\delta_j} \le 4/\delta$.  By lemma \ref{lem:redemacher_up} we have $\fR(\M_k(\F)) \le k\fR(\F)$ which proofs the theorem.
\end{proof}


Below we present a Rademacher complexity bounds for multi-class kernel learning in a simplified form.
Let $\fK:\X\times\X \to \bR$ be a positive definite symmetric kernel and $\Phi: \X\to \mathbb{H}$ be a 
feature mapping associated to $\mathfrak{K}$. In the multi-class setting a 
 family of kernel-based hypotheses $\mathcal{H}_{k,p}$ is defined for any $p\ge 1$ as
\[\H_{\fK,p} = \{(x,y)\in \X\times \Y\to w_y\cdot \Phi(x): W = (w_1,\dots, w_k)^\mathrm{T}, \|W\|_{\bH, p} \le \Lambda\},\]
where $\|W\|_{\bH}^p = (\sum_{i=1}^k \|w_i\|_{\bH}^p)^{1/p}$. The labels are assigned according to 
$\arg\max\limits_{y\in \Y} \langle w_y, \Phi(x) \rangle$.

The following bound is a corollary of the theorem \ref{thm:rad-upper}.
\begin{theorem}
  Let $\fK: \X\times\X\to \bR$ be a positive definite symmetric kernel and let $\Phi:\X\to \bH$ be the associated feature mapping function. Assume that there exists $R > 0$ such that $\fK(x,x)\le R^2$ for all $x\in \X$. Then, for any $t > 0$ the following multi-class classification generalization bounds hold for all hypotheses $h\in \bH_{K,p}$
    \begin{gather*}
    \bP\left\{\exists f\in \tilde \F: P\{m_f \le 0\} > 
P_n\{m_f \le \delta\} + \frac{2k}{\delta} \sqrt{\frac{R^2\Lambda^2}{n}}  + \frac{t}{\sqrt{n}}\right\} \le \exp(-2 t^2)
    \end{gather*}
\end{theorem}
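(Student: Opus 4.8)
The plan is to specialize the argument behind Theorem~\ref{thm:rad-upper} to the kernel family $\H_{\fK,p}$ by replacing the abstract complexity term $\fR_n(\M_k(\F))$ with an explicit numerical bound. Since the corollary fixes a single $\delta$ rather than taking an infimum over a grid, I would not invoke the full statement of Theorem~\ref{thm:rad-upper} but rather its fixed-$\delta$ building block, so that the grid union bound (which produces the $4k/\delta$ factor, the $\log\log_2(2/\delta)$ term, and the extra factor $2$ in the tail) never enters.

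Concretely, the first step is to recall the single-$\delta$ margin inequality used inside the proof of Theorem~\ref{thm:rad-upper}: for a fixed $\delta\in(0,1]$ and any $u>0$,
\[
\bP\left\{P\{m_f\le 0\}-P_n\{m_f\le\delta\}\ge \frac{2}{\delta}\fR_n(\M_k(\F))+u\right\}\le \exp(-2nu^2).
\]
Taking $u=t/\sqrt n$ turns the tail into $\exp(-2t^2)$ and the additive slack into $t/\sqrt n$, matching the right-hand side of the claimed bound. It therefore remains only to show $\tfrac{2}{\delta}\fR_n(\M_k(\F))\le \tfrac{2k}{\delta}\sqrt{R^2\Lambda^2/n}$, i.e. that $\fR_n(\M_k(\F))\le k\sqrt{R^2\Lambda^2/n}$.

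The core of the proof is this last estimate, and it splits into two standard ingredients. By Lemma~\ref{lem:redemacher_up}, $\fR_n(\M_k(\F_1,\dots,\F_k))\le\sum_{y=1}^k\fR_n(\F_y)$, where $\F_y=\{x\mapsto\langle w_y,\Phi(x)\rangle:\|w_y\|_{\bH}\le\Lambda\}$; here the constraint $\|W\|_{\bH,p}\le\Lambda$ legitimately yields $\|w_y\|_{\bH}\le\Lambda$ for every $y$, since $\|w_y\|_{\bH}^p\le\sum_{y'}\|w_{y'}\|_{\bH}^p\le\Lambda^p$ for any $p\ge1$. I would then bound the Rademacher complexity of a single norm ball of linear functionals: by linearity and Cauchy--Schwarz,
\[
\widehat\fR_n(\F_y)=\bE_{\e}\sup_{\|w_y\|_{\bH}\le\Lambda}\frac{1}{n}\left\langle w_y,\sum_{i=1}^n\e_i\Phi(x_i)\right\rangle=\frac{\Lambda}{n}\,\bE_{\e}\left\|\sum_{i=1}^n\e_i\Phi(x_i)\right\|_{\bH},
\]
and Jensen's inequality together with $\bE_{\e}[\e_i\e_j]=\Indicator{i=j}$ gives $\bE_{\e}\|\sum_i\e_i\Phi(x_i)\|_{\bH}\le(\sum_i\fK(x_i,x_i))^{1/2}\le R\sqrt n$. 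Hence $\widehat\fR_n(\F_y)\le\Lambda R/\sqrt n=\sqrt{R^2\Lambda^2/n}$, and taking the outer expectation preserves the same bound for $\fR_n(\F_y)$.

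Assembling, $\fR_n(\M_k(\F))\le\sum_{y=1}^k\fR_n(\F_y)\le k\sqrt{R^2\Lambda^2/n}$, which substituted into the fixed-$\delta$ margin inequality yields precisely the stated generalization bound. I expect no genuine obstacle: the Talagrand contraction machinery is already encapsulated in Lemma~\ref{lem:redemacher_up}, and the kernel estimate is the classical Cauchy--Schwarz/Jensen computation. The only points requiring care are bookkeeping ones---tracking the constant $2/\delta$ (as opposed to the $4/\delta$ of the infimum version) and checking that the per-class norm bound $\|w_y\|_{\bH}\le\Lambda$ holds uniformly in $p\ge1$.
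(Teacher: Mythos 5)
Your proof is correct and follows essentially the route the paper intends: the paper states this result without proof, merely calling it a corollary of Theorem~\ref{thm:rad-upper}, and the natural derivation is exactly yours --- the fixed-$\delta$ margin inequality from the proof of that theorem, combined with Lemma~\ref{lem:redemacher_up} and the classical Cauchy--Schwarz/Jensen computation giving $\fR_n(\F_y)\le \Lambda R/\sqrt{n}$. Your observation that one must use the single-$\delta$ building block rather than the theorem's literal statement (which would yield $4k/\delta$, a $\log\log_2(2/\delta)$ term, and a tail of $2\exp(-2t^2)$) is precisely the right bookkeeping to recover the constants $2k/\delta$ and $\exp(-2t^2)$ appearing in the corollary.
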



Below we proof that the bound on the Rademacher complexity of the class $\fR_n(\M_k(\F_1, \dots, \F_k)$ is tight. 
Let $\F^{j}_t = \{f:\bR \to [-1;+1]\}$ be a class of functions such that 
\[\F^{j}_t \ni f(x) = \begin{cases}
-1, &\text{ if } x\not\in [j; j+1]\\
+1\text{ or }-1, &\text{ if } x\in [j; j+1]
\end{cases}\]
and moreover each $f\in \F^{j}_t$ has in $(j, j+1)$ no more than $t$ discontinuity points. We refer to $\F_0$ as the class of functions takes $-1$ over real line.

Denote
\[\F^*_t = \left\{\max\{f_1,f_2,\dots, f_k\}, f_i \in \F^{j}_t\right\} \text{ and } \F_t = \bigcup_{j=1}^m \F^{j}_t.\]

Note, that all the classes $\{\F^j_t\}_{j=1}^k$,  $\F_t$ and $\{\F^*_t\}$ for a fixed $t$ satisfy the conditions of the central limit theorem.


Let $\fR^*_n(\F^j_t)$ be a Rademacher complexity of $\F^j_t$ defined with respect to the interval $(j, j+1)$ only
\[\fR^*_n(\F^j_t) = \sup\limits_{f\in\F^j_t} \frac{1}{n} \sum\limits_{i=1}^n \e_i f(x_i)\Indicator{x_i \in (j, j+1)}.\]
\begin{lemma}\label{lem:sup01}
  Let $P^\X$ be a uniform distribution over the domain $\mathcal{X} = [1; k+1]$. Then for any $C > 0$ there exists $t = t(C,k)$ such that for any sample $S_n = \{x_i\}_{i=1}^n$ of size $n$ drawn i.i.d. from $P^\X$ and any $j$, $1\le j \le k$ holds
  \[
  \fR^*_n(\F^j_t) \ge C\, \fR_n(\F_0)
  \]
since $n \ge n_0$, $n_0 = n_0(t)$.
\end{lemma}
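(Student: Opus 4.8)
The plan is to lower-bound $\fR^*_n(\F^j_t)$ by exhibiting one favorable competitor inside the supremum and to compare the resulting quantity with the base complexity $\fR_n(\F_0)$ through the central limit theorem. Observe first that every $f\in\F^j_t$ restricted to $(j,j+1)$ is a piecewise-constant $\pm1$ function with at most $t$ jumps, hence at most $t+1$ constant pieces, and conversely any such sign pattern is attainable. Since the supremum over $f$ sits inside $\bE_\e$, we may choose $f$ depending on the sample and on the signs $\e$. I would partition $(j,j+1)$ into $t+1$ equal subintervals $I_1,\dots,I_{t+1}$ (the $t$ breakpoints fixed, independent of $\e$) and set $f$ on each $I_l$ to the majority sign of $\{\e_i:x_i\in I_l\}$. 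This $f$ is a legitimate element of $\F^j_t$, so pointwise in $\e$,
\[
\sup_{f\in\F^j_t}\frac1n\sum_{i=1}^n \e_i f(x_i)\Indicator{x_i\in(j,j+1)}\;\ge\;\frac1n\sum_{l=1}^{t+1}\Bigl|\sum_{i:\,x_i\in I_l}\e_i\Bigr|,
\]
and taking $\bE_\e$ gives $\fR^*_n(\F^j_t)\ge\frac1n\sum_{l=1}^{t+1}\bE_\e\bigl|\sum_{i\in I_l}\e_i\bigr|$.

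Second, I would estimate each inner term. Let $M_l$ be the sample-determined number of points in $I_l$. By Khintchine's inequality $\bE_\e\bigl|\sum_{i\in I_l}\e_i\bigr|\ge\tfrac1{\sqrt2}\sqrt{M_l}$, while by the central limit theorem (whose hypotheses these classes satisfy, as already noted) the sharp constant holds asymptotically, $\bE_\e\bigl|\sum_{i\in I_l}\e_i\bigr|=(1+o(1))\sqrt{2M_l/\pi}$ as $M_l\to\infty$. Under the uniform law $P^\X$ on $[1,k+1]$ each $I_l$ has probability $\tfrac1{k(t+1)}$, so $M_l$ is Binomial with mean $\mu=\tfrac{n}{k(t+1)}$. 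A Chernoff/law-of-large-numbers argument guarantees, once $n\ge n_0(t)$, the balance $M_l\ge\mu/2$ for all $l$ simultaneously, whence
\[
\fR^*_n(\F^j_t)\;\gtrsim\;\frac1n\,(t+1)\sqrt{\tfrac2\pi\cdot\tfrac\mu2}\;=\;\frac1n\sqrt{\tfrac{(t+1)n}{\pi k}}\;=\;\sqrt{\frac{t+1}{\pi k n}}.
\]

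Third, I would run the same computation for $\F_0$, equivalently the single-piece case $t=0$ where the one block is the whole interval with $M=N_j$ of mean $n/k$; the Jensen bound $\bE_\e|S_M|\le\sqrt M$ together with the CLT give the matching order $\fR_n(\F_0)\asymp\sqrt{1/(kn)}$. Dividing the two estimates yields $\fR^*_n(\F^j_t)/\fR_n(\F_0)\ge c\sqrt{t+1}$ for an absolute constant $c>0$, so choosing $t=t(C,k)$ with $c\sqrt{t+1}\ge C$, for instance $t=\lceil(C/c)^2\rceil$, proves the claim for all $n\ge n_0(t)$.

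The main obstacle is the probabilistic control of the per-block counts $M_l$: the crude bound $\sum_l\sqrt{M_l}\ge\sqrt{\sum_l M_l}=\sqrt{N_j}$ loses the factor $\sqrt{t+1}$ entirely, and the gain materializes only when the blocks are roughly balanced. I therefore expect the delicate step to be showing that under $P^\X$ all $t+1$ counts concentrate near $\mu=\tfrac{n}{k(t+1)}$ uniformly, which is exactly what forces the threshold $n_0(t)$ to grow with $t$, since each block must collect enough points for both the balance estimate and the CLT constant to take effect. A secondary technical point is replacing the asymptotic CLT constant by an explicit non-asymptotic lower bound for the finite $M_l$ at hand; Khintchine's inequality suffices for this, at the cost of a worse constant that only affects the final value of $t(C,k)$.
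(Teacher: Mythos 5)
Your proof is correct in substance but follows a genuinely different route from the paper. The paper proves the lower bound via Talagrand's minoration for Rademacher processes (theorem 5.3.3 of his book): it sorts the sample points falling in $(j,j+1)$, groups them into blocks of consecutive points with \emph{data-dependent} breakpoints, builds an exponentially large family of block-sign functions indexed by binary strings, checks that this family is well separated in $\ell^2$, and invokes the bound $\bE_\e\sup_f\sum_i\e_i f(x_i)\ge\frac1L\min\{a\sqrt{\log m},a^2/b\}$, combining it with a chaining upper bound $\fR_n(\F_0)\le C_0/\sqrt n$ and a binomial-median argument to control the number of points landing in $(j,j+1)$. You instead exhibit a single $\e$-dependent competitor (the majority sign on each of $t+1$ \emph{fixed} equal subintervals), which is legitimate since the supremum sits inside $\bE_\e$, and reduce the problem to $\frac1n\sum_l\bE_\e|\sum_{i\in I_l}\e_i|$, handled by Khintchine's inequality with Chernoff balancing of the block counts $M_l$; both routes arrive at $\fR^*_n(\F^j_t)\gtrsim\sqrt{(t+1)/(kn)}$, so choosing $t$ polynomially in $C$ (and $k$) closes the argument exactly as the paper's choice $t\ge C_0CL\sqrt k/2$ does. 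Your approach is more elementary and yields explicit constants, avoiding the minoration machinery entirely; its price is the extra concentration step for the per-block counts (which the paper's data-dependent breakpoints sidestep, since sorted blocks are balanced by construction), and you correctly identify this as the step forcing $n_0$ to grow with $t$. Two minor remarks: the asymptotic CLT constant is indeed dispensable, as Khintchine's $\bE|\sum_{i\in I_l}\e_i|\ge\sqrt{M_l/2}$ suffices, as you note; and your reading of $\fR_n(\F_0)$ as order $\sqrt{1/(kn)}$ differs from the paper's $O(1/\sqrt n)$ chaining bound, but since only an upper bound on $\fR_n(\F_0)$ is needed and $t=t(C,k)$ may depend on $k$, either interpretation is absorbed into the choice of $t$. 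Note also that, like the paper's own proof (which takes expectation over the sample and uses the binomial median), your argument actually establishes the inequality with high probability or in expectation over the sample rather than literally ``for any sample'' as the lemma's statement claims; this looseness is inherited from the paper, not introduced by you.
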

\begin{proof}
By theorem 5.3.3. of \cite{talagrand2014upper} for any sequences $t_1, \dots, t_m$ in $\ell^2$ such that 
\[\ell\neq\ell' \Rightarrow \|t_{\ell} - t_{\ell'}\| \ge a\]
and 
\[\forall \ell \le m \Rightarrow \|t_{\ell}\|_{\infty} \le b\]
the following lower bound for Rademacher process holds
\begin{gather}\label{eq:533}
  \bE_{\e}\sup\limits_{f\in \F} \sum\limits_{i=1}^n f(x_i)\e_i\ge \frac{1}{L}\min\left\{a\sqrt{\log m}, \frac{a^2}{b}\right\},
\end{gather}
for some absolute constant $L$.

By the standard chaining argument the Rademacher complexity of the class $\F_0$ satisfies
\[\fR_n(\F_0) \le \frac{C_0}{\sqrt{n}},\]
for some absolute constant $C_0 = C_0(\F) > 0$ independent of $n$.

Let objects $x^1, \dots, x^{n_j}$ belong to $(j, j+1)$ are ordered in such a way that $(x^i - x^j)(i-j) \ge 0$ for all $i,j$. 
Note that for any such sequence there exist functions $\{f_1, \dots, f_{2^{\lfloor n_j/t \rfloor}}\} \in \F_{t+1}$ such that the function $f_j$ assigns $+1$ to objects $\{x^{st +1}, \dots, x^{st+t}\}$, $s: 1\le s\le \lfloor n_j/t \rfloor$ iff a binary representation of $j$ contains 1 in $s$-th digit from the right. Otherwise it assigns to $-1$ to $\{x^{st +1}, \dots, x^{st+t}\}$.

Then by the equation \ref{eq:533} the following lower bound on Rademacher complexity of the class ${\widehat \F}_j$,
${\widehat \F}_j~=~\{f_1, \dots, f_{2^{\lfloor n_j/t \rfloor}}\}$ takes place  
\[\bE_{\e}\sup\limits_{f\in \F} \frac{1}{n} \sum\limits_{i=1}^n \e_i f(x_i) \Indicator{x\in (j, j+1)} \ge \frac{1}{L}\min\left\{\frac{n_j}{n}\sqrt{2 - \frac{2t}{n_j}},  \frac{2t\sqrt{n_j}}{n}\right\},\quad f\in \widehat \F_j\]
for some absolute constant $L$ stated by the inequality \ref{eq:533}. 

Remind that the median for Binomial distribution with parameter $1/k$ is one of the integers $\{\lfloor n/k\rfloor-1, \lfloor n/k\rfloor, \lfloor n/k\rfloor+1\}$. Then the number of objects in $(j,j+1)$ is $n/k-2$ or more with probability at least~1/2. 

Therefore, if $n \ge 16kt^2$, $t\ge 1$
\begin{multline*}
  \fR_n({\widehat \F}_j) = \bE\,\bE_{\e}\sup\limits_{f\in \F} \frac{1}{n} \sum\limits_{i=1}^n f(x_i)\e_i\ge \frac{1}{L}\min\left\{\frac{1}{2k}\sqrt{2 - \frac{4tk}{n}}, \frac{2t}{\sqrt{nk}}\right\} \ge \\ \min\left\{\frac{1}{2kL}, \frac{2t}{L\sqrt{nk}}\right\} = \frac{2t}{L\sqrt{nk}}.
\end{multline*}
Then it is sufficient to choose $t\ge C_0 C L \sqrt{k}/2$ and $n \ge 16kt^2$ as above to satisfy the conditions of the lemma.
\end{proof}

%
%
%
%
%
%
%

\begin{theorem}\label{th:rad-lower}
  Let $P^\X$ be a uniform distribution over the domain $\mathcal{X} = [1; k+1]$ and $P^\Y$ concentrated on a single class $k+1$.  Then for any sample $S_n = \{(x_i, y_i)\}_{i=1}^n$ of size $n$ drawn i.i.d. from $P^\X\times P^\Y$ and any $\e > 0$  for the Rademacher complexity of the margin class $\M_{k+1} = (\F_t^1, \dots, \F_t^{k+1}, \F_0)$ holds  
  \[
  \fR_n(\M_{k+1}) \ge (1 - \e)\sum\limits_{j=1}^k\fR_n(\F_t^j)  
  \]
  for some large enough $t = t(\e, k)$ independent of $n$ and all $n \ge n_0$, $n_0 = n_0(t)$.
\end{theorem}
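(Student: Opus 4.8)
The plan is to compute $\fR_n(\M_{k+1})$ almost exactly and read the bound off the result. Because $P^\Y$ is concentrated on class $k+1$, every sample point carries the label $y_i=k+1$, and since the scoring function of that class ranges only over $\F_0=\{f\equiv-1\}$, the margin collapses to
\[
 m(x_i)=f(x_i,k+1)-\max_{y'\neq k+1}f(x_i,y')=-1-\max_{1\le j\le k}f_j(x_i),\qquad f_j\in\F_t^j .
\]
The decisive structural fact is that the intervals $[j,j+1]$, $1\le j\le k$, partition $\X=[1,k+1]$ up to the integer endpoints, which form a $P^\X$-null set. Hence almost surely each $x_i$ lies in a unique interval, whose index I denote $j_i$, and every competing function $f_j$ with $j\neq j_i$ equals $-1$ at $x_i$. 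Consequently $\max_{1\le j\le k}f_j(x_i)=f_{j_i}(x_i)$, so $m(x_i)=-1-f_{j_i}(x_i)$ depends only on the single active function $f_{j_i}$.

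With this reduction the Rademacher objective becomes additively separable across intervals. First I would write
\[
 \frac1n\sum_{i=1}^n\e_i m(x_i)=-\frac1n\sum_{i=1}^n\e_i-\sum_{j=1}^k\frac1n\sum_{i:\,j_i=j}\e_i f_j(x_i),
\]
where the first sum is independent of $(f_1,\dots,f_k)$. Since the classes $\F_t^1,\dots,\F_t^k$ are chosen independently and each $f_j$ appears in exactly one group, the supremum factorizes:
\[
 \sup_{f_1,\dots,f_k}\frac1n\sum_{i=1}^n\e_i m(x_i)=-\frac1n\sum_{i=1}^n\e_i+\sum_{j=1}^k\sup_{f_j\in\F_t^j}\frac1n\sum_{i:\,j_i=j}(-\e_i)f_j(x_i).
\]
Taking $\bE_\e$ annihilates the first term, and the sign symmetry $\e_i\mapsto-\e_i$ of the Rademacher variables converts each remaining supremum into the interval-restricted complexity $\fR_n^*(\F_t^j)$. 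Averaging over the sample yields $\fR_n(\M_{k+1})=\sum_{j=1}^k\fR_n^*(\F_t^j)$.

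It then remains to pass from the restricted complexities to the full ones. Because every $f\in\F_t^j$ equals $-1$ off $[j,j+1]$, the out-of-interval contribution $-\tfrac1n\sum_{x_i\notin[j,j+1]}\e_i$ does not depend on $f$ and has zero $\e$-mean, so $\fR_n^*(\F_t^j)=\fR_n(\F_t^j)$; this already gives the claim, in fact with the constant $1$ in place of $1-\e$. The slack $(1-\e)$ and the requirement that $t=t(\e,k)$ be large are therefore conservative: they absorb the rigorous handling of the boundary null set where adjacent intervals overlap, and they align the statement with the regime $n\ge n_0(t)$, $t$ large, in which Lemma \ref{lem:sup01} makes each $\fR_n(\F_t^j)$ genuinely active for the intended tightness corollary. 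I expect the only real obstacle to be the justification of the factorization of the supremum, that is, arguing cleanly that the maximum over classes reduces to the single active function so that the $k$ scoring classes decouple; once the boundary null set is discarded and the per-interval samples are nonempty, the identity $\max_j f_j=f_{j_i}$ is exact and the rest is symmetry and bookkeeping.
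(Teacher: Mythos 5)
Your proof is correct, and it takes a genuinely more direct route than the paper's, in fact establishing a stronger statement. The paper's own proof passes through the max-class $\F_t^*=\{\max\{f_1,\dots,f_k\}\}$: it asserts $\fR_n(\M_{k+1})\ge\fR_n(\F_t^*)$ ``by construction'' (your identity $m=-1-\max_j f_j$ together with negation-invariance of the Rademacher average is what makes that step precise), decomposes $\fR_n(\F_t^*)$ into the interval-restricted complexities $\fR^*_n(\F_t^j)$, and then invokes Lemma \ref{lem:sup01} with $C=1/\e$ to pass from restricted to full complexities --- which is exactly where the factor $(1-\e)$, the largeness requirement on $t=t(\e,k)$, and the threshold $n\ge n_0(t)$ enter. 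You bypass Lemma \ref{lem:sup01} entirely by observing that off $[j,j+1]$ every $f\in\F_t^j$ is the constant $-1$, so the out-of-interval contribution is independent of $f$ and has zero $\e$-mean, giving $\fR^*_n(\F_t^j)=\fR_n(\F_t^j)$ exactly and hence the equality $\fR_n(\M_{k+1})=\sum_{j=1}^k\fR_n(\F_t^j)$: constant $1$ rather than $1-\e$, valid for every $t$ and every $n$. The supporting steps are sound: a.s.\ no sample point hits an integer endpoint, so $\max_j f_j(x_i)=f_{j_i}(x_i)$ is exact (the competing functions equal $-1\le f_{j_i}(x_i)$ there); the supremum of a sum over a product set in which each $f_j$ appears in exactly one group is the sum of suprema (and empty groups contribute $0$ to both sides, so your nonemptiness caveat is not even needed); and the sign flip $\e\mapsto-\e$ is legitimate after the constant term is separated by linearity. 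You also correctly read $\M_{k+1}$ as built from $(\F_t^1,\dots,\F_t^k,\F_0)$, following the paper's proof rather than the statement's apparent typo $\F_t^{k+1}$. The only thing the paper's clumsier route buys is contextual: Lemma \ref{lem:sup01} remains necessary elsewhere to certify that each $\fR_n(\F_t^j)$ is itself large (of order $t/\sqrt{nk}$), i.e.\ that the lower bound is non-vacuous when matched against Theorem \ref{thm:rad-upper}; but for the literal statement of Theorem \ref{th:rad-lower}, your argument is both simpler and sharper.
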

\begin{proof}
By the symmetry under negation of classes $\F^j_t$ in $(j, j+1)$ and definition of the class $\F_t^*$ we have
\begin{gather*}
  \fR_n(\F_t^*) = \sum\limits_{j=1}^n \fR^*(\F_t^*) \ge \left(1- \frac{1}{C}\right) \sum\limits_{j=1}^k\fR_n(\F_t^j) = k\left(1- \frac{1}{C}\right)\fR_n(\F_t^j), \quad j': 1\le j \le k
\end{gather*}
where $C= 1/\e$ is defined in accordance with the lemma \ref{lem:sup01}.

Note that the Rademacher complexity of $\M_{k+1}(\F_t^1, \dots, \F_t^k, \F_0)$ is at least the same as the Rademacher complexity of $\F_t^*$ by the construction of $\M_{k+1}$ and $\F_t^1, \dots, \F_t^k$. This proofs the lemma. 
\end{proof}

A similar bound holds for the Rademacher complexity of the classes $\F_t$ and $\M_{k+1}(\F_t)$ respectively. Note, that this bound is effectively the lower bound to the estimate of the theorem \ref{thm:rad-upper} in the sense that the bound there can not improved based on the Rademacher complexity estimates only if one put no assumptions on the behavior of the function class (e.g. small covering number bound or small~VC dimension).

\section{Related works and discussion}
A number of works are devoted to bounding the risk of multi-class classification methods. One popular approach to solving a problem with multiple classes is to reduce it to a sequence of binary classification problems.  In terms of risk dependence on the number of classes a great breakthrough was done with the design of error--correcting output codes (ECOC) for multi-class classification~\cite{dietterich1995solving,allwein2001reducing,beygelzimer2009error}. 

In spite of some very promising results concerning ECOC  Rifkin \& Klautau argued in~\cite{rifkin2004defense} that the classical approaches, such as one-vs-all classification, is at least as preferable as error-correcting codes from the practical point of view. 

Another approach is to define a score function on the point-label pairs and choose a label with the highest score (one-vs-all classification method can be considered from this point of view as well). It is natural to characterize the risk bounds of these methods in terms of classification margin $\delta$ equals to the gap between the highest score and the second highest score (see def. \ref{eq:margin} for details). 

\paragraph{Multi-class SVM extension.} Among the methods that share scoring-based paradigm, one should mention the Weston \& Watkins multi-class extension of SVM \cite{weston1998multi}. An improved version multi-class SVM as well as the improved margin risk bound of the order $\tilde O(k^2/n\delta^2)$ were presented by Crammer \& Singer in \cite{crammer2002learnability,crammer2002algorithmic}. 

\paragraph{Rademacher complexity bounds.} 
Currently Rademacher complexity as well as combinatorial dimension estimates seem to be among of the most powerful tools to get strong enough risk bounds for multi-class classification. The important property of Rademacher complexity based bounds is that the bounds are applicable in arbitrary Banach spaces and do not depend on the dimension of the feature space directly.  

Koltchinksii \& Panchenko introduced a margin-based bound for multi-class classification in terms of Rademacher complexities \cite{koltchinskii2002empirical,koltchinskii2001some}. The bound was slightly improved (by a constant factor prior to the Rademacher complexity term) in a series of subsequent works \cite{mohri2012foundations,cortes2013multi}. 

The main drawback of these state-of-the-art bounds for multi-class classification is a quadratic dependence on the number of classes which makes the bounds unreliable for practical problems with a considerable number of classes. 

The principal contribution of this paper is a new Rademacher complexity based upper bound with a linear complexity w.r.t. the number of classes. Moreover  we provide the lower bound on Rademacher complexity of margin-based multi-class algorithms. Up to a constant factor it matches to the upper bound. Than means that the bound can not be improved without further assumptions.

\paragraph{Covering number based bounds.} Zhang in \cite{zhang2004statistical,zhang2002covering} studied covering number bounds for the risk of the multi-class margin classification. Based on the $\ell^{\infty}$ covering number bound estimate for the Rademacher complexity of kernel learning problem he obtained asymptotically better rates in the number of classes $k$ (see tab.~1) than those proposed in our paper. 

Note, that Zhang's analysis is based on some extra assumptions (not really too restrictive) about underlying hypothesis class and the loss function used. We suppose that the results of \cite{zhang2004statistical} are appreciated from the theoretical point of view but still quite limited for practice. This is due to high overestimate (from a practical perspective) of the Rademacher complexity of the hypothesis class by a $\ell^{\infty}$ covering number based bound. It should also be noted that Zhang's bound are valid only for learning kernel-based hypothesis and have some extra poly-logarithmic dependence on the number of labeled examples.

Related results for metric spaces with low doubling dimension were obtained by Kontorovich \cite{kontorovich2014maximum}, who used nearest neighbors method to improve the dependence on the number of classes in favor of (doubling) dimension dependence. We should note as well that his approach allows to speed-up  multi-class learning algorithms.

We gather margin based bounds applicable for learning functions in Hilbert space the tab.~1.

\begin{table}[ht]
  \centering
  \begin{tabular}{c|l}
    Upper bound, $\tilde O(\cdot)$ & Paper \\
    \hline    \hline
    $\frac{k^2}{\delta \sqrt{n}}$ & Koltchinskii \& Panchenko, \cite{koltchinskii2002empirical} \\ 
    & Cortes et al., \cite{cortes2013multi}, \\
    & Mohri et al. \cite{mohri2012foundations} \\ 
    $\frac{k}{\delta^2\sqrt{n}}$ & Guermeur, \cite{guermeur2010ensemble} \\
    $\frac{1}{\delta}\sqrt{\frac{k}{n}}$ & Zhang, \cite{zhang2004statistical}  \\
    $\frac{k^2}{\delta^2n}$ & Crammer \& Singer, \cite{crammer2002learnability}  \\
    $\frac{k}{\delta \sqrt{n}}$ & this paper  
  \end{tabular}
  \caption{Dimension-free margin-based bounds for multi-class classification.}
\end{table}

\paragraph{Combinatorial dimension bounds.} Natarajan dimension  was introduced in \cite{natarajan1989learning} in order to characterize multi-class PAC  learnability. It exactly matches the notion of Vapnik-Chervonenkis dimension in the case of two classes. A number of results concerning risk bounds in terms of Natarajan dimension were proved in \cite{danielymulticlass,daniely2014optimal,bendavid1995characterizations,daniely2012multiclass}.  A closely related but more powerful notion of graph dimension was introduced in \cite{danielymulticlass,daniely2014optimal}. VC-dimension based bounds for multi-class learning problems were obtained in \cite{allwein2001reducing}.

Natarajan and graph dimensions are very useful tools for obtaining multi-class classification risk bounds. The main drawback of these bounds is that they are data-independent. In this sense, we believe that the bounds proposed in this paper are much stronger than the Natarajan/graph dimension bounds same as that of Rademacher complexity bounds are stronger than the VC dimension bounds for binary classification. 

We also note that VC dimension bounds as well as Natarajan dimension bounds are usually dimension dependent \cite{daniely2014optimal}, which makes them hardly applicable for practical huge scale problems (such as typical computer vision problems). 


Guermeur in \cite{guermeur2007vc,guermeur2010ensemble} gave a bound for scale-sensitive analog of Natarajan dimension $\tilde d_{Nat}$.  In Hilbert space for a class of linear functions it can be bounded in terms of the margin as $\tilde O(k^2/\delta^2)$ which leads to the risk decay rate of the order $\tilde O(k/\delta^2\sqrt{n})$ (see tab.~1).

We gather the bounds above in the tab.~2. Note, that the bound of the order $\tilde O(d_{Nat}/n)$ is valid in a separable case only. 
\begin{table}[ht]
\centering
\begin{tabular}{c|l}
    Upper bound,     $\tilde O(\cdot)$ & Paper \\
    \hline\hline
    $\frac{\log k}{\delta}\sqrt{\frac{d_{VC}}{n}}$ & Allwein et al., \cite{allwein2001reducing}\\
    $\frac{\log k}{\delta}\sqrt{\frac{\tilde d_{Nat}}{n}}$ & Guermeur, \cite{guermeur2010ensemble} \\
    $\frac{d_{Nat}}{n}$ & Daniely et al., \cite{daniely2014optimal}
  \end{tabular}
  \caption{Combinatorial dimension based upper bounds for multi-class classification.}

\end{table}
A clear comparison between various multi-class classification methods is provided in \cite{daniely2012multiclass}. Lower bounds on Natarajan dimension and sample complexity of multi-class classification methods provided in \cite{danielymulticlass,daniely2014optimal}. It was shown in \cite{danielymulticlass,daniely2014optimal} that for multi-class linear classifiers the bounds on Natarajan dimension can be as poor as $\Omega(dk)$, where $d$ is a feature space dimension and $k$ are a number of classes. In this work we provide a linear (in the number of classes) lower bound on the Rademacher complexity of the multi-class margin class of functions (see th.~\ref{th:rad-lower} for details). 

A preliminary version of the upper bounds (theorem \ref{thm:rad-upper}) with slightly poor dependence on $k$ was presented by the first author in context of semi-supervised multi-class classification on the workshop ``Frontiers of High Dimensional Statistics, Optimization, and Econometrics'' in February 2015. The risk bounds stated in this paper were presented in the final form on March 25-th at the main seminar of Institute for Information Transmission Problems (IITP RAS). In July 2015 the authors were notified be their colleagues that similar results were proposed independently by  Kuznetsov et al. and presented on ICML Workshop on Extreme Classification.\footnote{Vitaly Kuznetsov, Mehryar Mohri and Umar Syed. Rademacher complexity margin bounds for learning with a large number of classes. In \emph{In ICML 2015 Workshop on Extreme Classification. Lille, France, July 2015}} and in \cite{kuznetsov2014multi}. Still we suppose that the bounds presented in this paper are much stronger than the ones presented by Kuznetsov et al. in the sense that we prove explicit lower bounds as well. This shows that the bound which we proved in theorem \ref{thm:rad-upper} is tight, i.e. linear dependence on the number of classes is inevitable if no further assumptions are made. 

\section{Conclusion.}
In this paper we propose new state-of-the-art Rademacher complexity based upper bounds for the risk of multi-class margin classifiers. The bound depends linearly in in the number of classes. We prove as well that the bound can not be further improved based on the Rademacher complexities only. Still it is possible to provide a better estimates for the excess risk of multi-class classification using other techniques or supplementary assumptions.

\section{Acnowledgement.}
We are grateful to Massih-Reza Amini and Zaid Harchaoui for the problem setting and useful suggestions. We would also like to thank Anatoli Juditsky, Grigorii Kabatianski, Vladimir Koltchinskii, Axel Munk, Arkadi Nemirovski and Vladimir Spokoiny for helpful discussions. 

The research of the first author is supported by the Russian Foundation of Basic Research, grants 14-07-31241 mol\_a and 15-07-09121 a. The second author is supported by the Russian Science Foundation, grant 14-50-00150.

\bibliographystyle{chicago}
\bibliography{arxiv.bib}

\end{document}